\newtheorem{definition}{Definition}
\newtheorem{assumption}{Assumption}
\newtheorem{lemma}{Lemma}
\newtheorem{theorem}{Theorem}
\newtheorem{corollary}{Corollary}
\newtheorem{proposition}{Proposition}
\newtheorem{remark}{Remark}
\def\blfootnote{\gdef\@thefnmark{}\@footnotetext}
\definecolor{tumblue}{HTML}{0065bd} 
\definecolor{tumblue4}{HTML}{98c6ea} 
\definecolor{tumbluered}{HTML}{bd6500}
\definecolor{tumblue4red}{HTML}{eac698}
\newcommand{\icol}[1]{
  \left(\begin{smallmatrix}#1\end{smallmatrix}\right)%
}
\newcommand\clr[2]{{\color{#1}{#2}}}
\newcommand\ko[0]{{\mathcal{K}}}
\newcommand\Set[1]{\mathbb{#1}} 
\newcommand{\tsgn}[1]{{#1}}
\pgfplotsset{width=10\columnwidth /10, compat = 1.13, 
	height = 55\columnwidth /100, grid= major, 
	legend cell align = left, ticklabel style = {font=\scriptsize},
	every axis label/.append style={font=\small},
	legend style = {font=\tiny},title style={yshift=-7pt, font = \small} }
\tikzset{
    dot diameter/.store in=\dot@diameter,
    dot diameter=3pt,
    dot spacing/.store in=\dot@spacing,
    dot spacing=3pt,
    dots/.style={
        line width=\dot@diameter,
        line cap=round,
        dash pattern=on 0pt off \dot@spacing
    }
}
\title{\LARGE \bf
Diffeomorphically Learning Stable Koopman Operators
}
\author{{Petar Bevanda}$^{1}$, {Max Beier}$^{1}$, {Sebastian Kerz}$^{1}$, {Armin Lederer}$^{1}$, {Stefan Sosnowski}$^{1}$ and Sandra Hirche$^{1}$
\thanks{*This work was supported by the European Union's Horizon 2020 research and innovation programme under grant agreement no. 871295 "SeaClear".
}
\thanks{$^{1}$The authors are with the Department of Electrical and Computer Engineering, Technical University of Munich, Germany.
{\tt\small\{petar.bevanda, max.beier, s.kerz, armin.lederer, sosnowski, hirche\}@tum.de}.}%
}
\begin{document}

\maketitle
\thispagestyle{empty}
\pagestyle{empty}

\begin{abstract}
\clr{black}{System representations inspired by the infinite-dimensional Koopman operator (generator) are increasingly considered for predictive modeling. Due to the operator's linearity, a range of nonlinear systems admit linear predictor representations -- allowing for simplified prediction, analysis and control%
}. However, finding meaningful finite-dimensional representations for prediction is difficult as it involves {determining} features that are both {Koopman-invariant} (evolve linearly under the dynamics) as well as {relevant} (spanning the original state) -- a generally unsupervised problem. 
 In this work, we present \textit{Koopmanizing Flows} -- a novel \clr{black}{continuous-time} framework for supervised learning of linear predictors for a class of nonlinear dynamics. \clr{black}{In our model construction a latent diffeomorphically related linear system unfolds into a linear predictor through the composition with a monomial basis. The lifting, its linear dynamics and state reconstruction are learned simultaneously, while an unconstrained parameterization of Hurwitz matrices ensures asymptotic stability regardless of the operator approximation accuracy.}
The superior efficacy of Koopmanizing Flows is demonstrated in comparison to a state-of-the-art method on the well-known LASA handwriting benchmark.
\end{abstract}
\section{INTRODUCTION}
Global linearization methods for nonlinear systems inspired by the infinite-dimensional, \textit{linear} Koopman operator~\cite{Koopman1931} have received increased attention for modeling nonlinear dynamics in recent years \cite{Korda2020b,Lian2019,Bevanda2021b,Fan2021}. 
\clr{black}{Compared to conventional state-space modeling, lifting a finite-dimensional nonlinear system to a higher-dimensional linear operator representation allows for simplified, linear predictor, models that are compatible with linear techniques for prediction, analysis and control~\cite{Bevanda2021a}.}

However, obtaining long-term accurate predictive models using finite-dimensional Koopman operator dynamical models is challenging,
\clr{black}{~as it generally incorporates an unsupervised learning problem. The latter involves learning a linear predictor whose coordinates are both \textit{Koopman-invariant}, i.e., their evolution remains in the span of the features, as well as \textit{relevant} enough to (almost) fully span the original state -- reconstructing it in a linear fashion.

As solving the unsupervised problem is challenging, the majority of works make the problem supervised by projecting the operator onto predetermined features -- akin to Galerkin methods -- using well-known EDMD \cite{Williams2015a}.} 
However, presupposing a suitable basis of functions is a very strong assumption for linear time-invariant prediction -- leading to only locally accurate models.
Other approaches learn the features simultaneously \cite{Li2017a} or in a decoupled manner \cite{Lian2019} leveraging the expressive power of neural networks or kernel methods, but often lack theoretical justification. 

To learn the relevant features and operator spectrum simultaneously using machine learning, the sole expressivity of the learning methods does not immediately lead to reliability in solving the generally unsupervised problem of learning Koopman-related models as it requires certain structure to be well-posed. For dissipative systems, a way to improve the reliability of learning these models is enforcing stability. \clr{black}{Nonetheless, few approaches impose such a constraint with guarantees. The authors in \cite{Pan2020} show improved performance by a stable transition matrix in the latent space. However, they only parameterize diagonalizable matrices without considering the stability of the resulting model as a whole.
The recent SKEL framework \cite{Fan2021} provides asymptotic stability guarantees of the learned model in a fully data-driven manner. Notably, the former and latter approaches do not reconstruct the observable of interest linearly -- prohibiting advantageous reformulations for prediction, control and estimation using tools from linear systems theory. Furthermore, they are dependent on trajectories, whose length impacts performance as their optimization objective minimizes a multi-step error in lifted space. 
As a consequence, there are no supervised targets to fit as they comprise of maps that themselves need to be optimized, making the problem unsupervised.

We, however, consider the continuous-time setting, as it is native to many physical and biological systems. Constructing linear predictors in continuous-time allows for trajectory-independent learning and provides a valid model for arbitrary discretization times.
The related work of \cite{Bevanda2021b} considers the continuous-time setting but is not fully data-driven as it assumes a known Jacobian linearization diffeomorphic to the nonlinear system -- often a strong assumption. The reconstruction matrix is subsequently fitted without consideration of generalized eigenspaces.
}

\clr{black}{
In this paper we present \textit{Koopmanizing Flows}, a fully data-driven framework for learning asymptotically stable continuous-time \textit{linear predictors} that ensures system- and Koopman-theoretic considerations are embedded in the learning approach. 
The Koopman-theoretic aspect considers a lifting construction that preserves Koopman-invariance, whereas the system-theoretic notions are related to stability and smooth equivalence.
These two aspects merge in our model construction as a latent diffeomorphically related system expands into a linear predictor through the composition with a monomial basis.
The lifting, its linear dynamics and state reconstruction are learned simultaneously in a supervised fashion while an unconstrained parameterization of stable matrices ensures asymptotic stability regardless of the operator approximation accuracy.}
To the best of our knowledge, this is the first trajectory-independent, continuous-time, framework that learns provably stable linear predictors for nonlinear systems. We demonstrate the superior performance of the proposed method in comparison to a state-of-the-art method on the well-known LASA handwriting benchmark.

This paper is structured as follows. After the problem statement in Section \ref{sec:KOthry}, we present a novel data-driven framework -- Koopmanizing Flows -- for constructing stable, Koopman operator dynamical models in Section \ref{sec:KF} which is followed by an evaluation in Section \ref{sec:Eval} and a conclusion.
\section{PROBLEM STATEMENT}\label{sec:KOthry}
 Consider an unknown, continuous-time nonlinear dynamical system\footnote{\textbf{Notation:}
		Lower/upper case bold symbols $\bm{x}$/$\bm{X}$ denote vectors/matrices. Symbols $\mathbb{N}/\mathbb{ R }/\mathbb{C}$ denote sets of natural/real/complex numbers, while $\mathbb{N}_{0}$ denotes all natural numbers with zero, and $\mathbb{R}_{+,0}/\mathbb{R}_{+}$ all positive reals with/without zero. 
		Function spaces with a specific integrability/regularity order are denoted as $L^{}$/$C^{}$ with the order in their exponent. The Jacobian matrix of vector-valued map $\bm{\psi}$ evaluated at $\bm{x}$ is denoted as $\bm{J}_{\bm{\psi}}(\bm{x})$.
The $L^p$-norm on a set $\Set{X}$ is denoted as $\|\cdot\|_{p, \Set{X}}$. Writing $\odot$ denotes the Hadamard product, $\operatorname{exp}$ pointwise exponential and $\circ$ function composition.}
	\begin{equation}\label{eq:sys}
	\dot{\bm{x}}=\bm{f}(\bm{x})
	\end{equation}
 with continuous states $\bm{x}\in\Set{X}$ on a compact set $\Set{X} \subset \mathbb{R}^{d}$ such that $\bm{f} \in C^2(\Set{X})$.
\begin{assumption}\label{ass:sysCLS}
{We assume the dynamical system \eqref{eq:sys} has a globally exponentially stable origin.}
\end{assumption}
The assumption is fulfilled for a fairly large class of practically relevant dynamics including, e.g., human reaching movements \cite{Khansari2011} or dissipative Lagrangian systems such as neutrally buoyant underwater vehicles~\cite{bookFOSS}.

Due to their continuous-time nature, the dynamics are fully described by the forward-complete flow map~\cite{Bittracher2015} of \eqref{eq:sys} given by \clr{black}{$\bm{F}^{t}(\bm{x}_0):= \bm{x}_0+\int_{t_{0}}^{t_{0}+t} \bm{f}(\bm{x}(\tau)) d \tau$}
which has a unique solution on $[0,+\infty)$ from the initial
condition $\bm{x}$ at $t = 0$ due to stability of the isolated attractor~\cite{Angeli1999}. This flow map naturally induces the associated Koopman operator semigroup as defined in the following.
\begin{definition}\label{def:Koop}
	The semigroup of Koopman operators $\{{\mathcal{K}}^{t}\}_{t \in \mathbb{R}_{+,0}}\!\!:C(\Set{X}) \!\mapsto\! C(\Set{X})$
	acts on an observable function ${h} \!\in\! C(\Set{X})$ on the state-space $\Set{X}$ through ${\mathcal{K}^{t}_{\bm{f}}} {{h}} ={{h}}\circ{\bm{F}^{t}}$.
\end{definition}
In simple terms, the operator applied to an observable function $h$ at time $t_0$ moves it along the solution curves if \eqref{eq:sys} as $\ko^t_{\bm{f}}{h}(\bm{x}(t_0))={h}(\bm{x}(t_0+t))$. Applied component-wise to the identity observable $\bm{h}(\bm{x})=\bm{x}$, it equals the flow $\bm{F}_{t}(x(t_0))$. Crucially, every $\ko^{t}_{\bm{f}}$ is a \textit{linear}\footnote{Consider $h_{1}, h_{2} \in {C}(\Set{X})$ and $\beta \in \Set{C}$. Then, using Definition \ref{def:Koop}, $\mathcal{K}_{t}\left(\beta h_{1}\!+\!h_{2}\right)=\left(\beta h_{1}\!+\!h_{2}\right) \circ \bm{F}_{t}=\beta h_{1} \circ \bm{F}_{t}\!+\!h_{2} \circ \bm{F}_{t}=\beta \mathcal{K}_{t} h_{1}\!+\!\mathcal{K}_{t} h_{2}$.} operator.
With a well-defined Koopman operator semigroup, we introduce its infinitesimal generator.\looseness=-1
\clr{black}{\begin{definition}[{\cite{Lasota1994}}]\label{def:generator}
The linear operator $\mathcal{G}_{\ko_{\bm{f}}}$ fulfilling $\mathcal{G}_{{\mathcal{K}_{\bm{f}}}} {h}=\lim _{t \rightarrow 0^{+}} {({\mathcal{K}}^{t} {h}-{h}})/{t} = \dot{h}$
	is the infinitesimal generator of the semigroup of Koopman operators $\{{\mathcal{K}}^{t}\}_{t \in \mathbb{R}_{+,0}}$.
\end{definition}}
The strength of the Koopman operator formalism is that it allows to decompose dynamics into linearly evolving coordinates, which naturally arise through the eigenfunctions of the evolution operator $\mathcal{G}_{{\mathcal{K}_{\bm{f}}}}$. These eigenfunctions are formally defined as follows.
\begin{definition}\label{eigF}
	An observable $\phi \in C(\Set{X})$ is called an \textit{eigenfunction} of $\mathcal{G}_{{\mathcal{K}_{\bm{f}}}}$ if it satisfies $\mathcal{G}_{\mathcal{K}_{\bm{f}}} \phi = \lambda \phi$,
	for an \textit{eigenvalue} $\lambda \in \mathbb{C}$. The span of eigenfunctions $\phi$ of $\mathcal{G}_{{\mathcal{K}_{\bm{f}}}}$ is denoted by~$\bm{\Phi}$.\looseness=-1
\end{definition}

With the above definitions, it is evident that the Koopman operator theory is inherently tied to the temporal evolution of dynamical systems \cite{Bevanda2021a}. Moreover, due to Assumption~\ref{ass:sysCLS}, 
the Koopman operator generator has a pure point spectrum for the dynamics \eqref{eq:sys} \cite{Mauroy2016b}. 
Therefore, for each observable $h$, there exists a sequence $v_j(h)\in\Set{C}$ of mode weights, such that we obtain the decomposition 
\begin{equation}\label{eq:KMD}
\dot{h} =\mathcal{G}_{\ko_{\bm{f}}} h =\mathcal{G}_{\ko_{\bm{f}}}\sum_{j=1}^{\infty} v_{j}(h) {\phi}_{j}=\sum_{j=1}^{\infty} v_{j}(h) \lambda_{j} {\phi}_{j}, 
\end{equation}
which is a superposition of infinitely many linear ODEs. With a slight abuse of operator notation, we can write the decomposition \eqref{eq:KMD} compactly as $\dot{h}=\mathcal{V}_{h}\mathcal{G}_{\ko_{\bm{f}}}\bm{\Phi}$, \clr{black}{where $\mathcal{V}_{h}$ is an operator projecting $\bm{\Phi}$ on the observable of interest.}
{Given the above, we introduce a \textit{generalized}
description of Koopman-invariant coordinates. 
\begin{definition}\label{def:LTIbase}
Consider the system \eqref{eq:sys}, a matrix $\bm{{A}} \in \Set{R}^{D \times D}$ and a finite collection of features $\bm{\psi}:=\left[\psi_{1}(\bm{x}), \ldots, \psi_{D}(\bm{x})\right]^{\top}$ with $\psi_i(\bm{x}) \in C^{1}(\Set{X})$ on a compact set $\mathbb{X}$. If $\bm{\psi}$ satisfies 
$\mathcal{G}_{\ko_{\bm{f}}}\bm{\psi}(\bm{x}):=\bm{J}_{\bm{\psi}}(\bm{x})\bm{f}(\bm{x})=\bm{\bm{{A}}} \bm{\psi}(\bm{x})$
it represents \textit{Koopman-invariant} features/coordinates for \eqref{eq:sys}.
\end{definition}
The above definition helps us pose the following functional optimization problem
\looseness=-1
\begin{subequations}\label{opt_probPre}
\begin{align}
\operatornamewithlimits{min}_{\bm{{A}} \in \Set{R}^{D \times D}, \bm{C} \in \Set{R}^{d \times D}, \bm{\psi}(\cdot)} &{\overbrace{\|\dot{\bm{h}}-\bm{C}\bm{A}\boldsymbol{\psi}\|}^{\text{prediction}}+ \overbrace{\|\bm{h}-\bm{C}\bm{\psi}\|}^{\text{reconstruction}}} \label{obj_optPre} \\
\text{s.t. \quad}  \bm{J}_{\bm{\psi}}(\bm{x})\bm{f}(\bm{x})&=\bm{\bm{{A}}} \bm{\psi}(\bm{x}) \label{space_optPre} \\
 \bm{A}\text{~is}&\text{~Hurwitz}\label{stab_cnstrPre}
\end{align}
\end{subequations}
\looseness=-1 
for obtaining a finite-dimensional model of~\eqref{eq:KMD}, e.g., for the full-state $\bm{h}(\bm{x})\tsgn{=}\operatorname{id}(\bm{x})$, where \eqref{space_optPre} ensures Koopman-invariance and~\eqref{obj_optPre} minimizes the prediction and projection errors of the feature collection onto the observable of interest. Note that the Hurwitz condition \eqref{stab_cnstrPre} suffices\footnote{\clr{black}{This is true under certain conditions \cite[Proposition 1, Remark 2]{Yi2021}.}} to ensure the asymptotic stability of the feature dynamics. 
As finding an analytical solution to \eqref{opt_probPre} is generally not feasible even in the case of known dynamics $\bm{f}(\cdot)$, we use data samples in order to obtain one. 
\clr{black}{To allow for a multi-variate regression problem formulation, we define the target vector ${{\bm{\xi}}} \equiv [{{\bm{\xi}_1}}{}^{\top}~\bm{\xi}_2{}^{\top}]{}^{\top}$, $\bm{\xi}_1=\dot{\bm{x}}$, $\bm{\xi}_2=\bm{x}$ for finding a function $\tilde{\bm{f}}$ such that $\bm{\xi}=\tilde{\bm{f}}(\bm{x})$.
\begin{assumption}\label{ass:data}
 A dataset of $N$ input-output pairs $\mathbb{D}_{N}=\{\bm{x}^{(i)}, {{\bm{\xi}}}^{(i)}\equiv\tilde{\bm{f}}(\bm{x}^{(i)})\}_{i=1}^{N}$ for ~\eqref{eq:sys} 
is available.
\end{assumption}}
Having measurements of the state and its time-derivative at disposal is a common assumption. Note that we do not require the dataset to reflect one or multiple trajectories. If not directly available, the time-derivative of the state can be approximated through finite differences for practical applications. Based on the finite dataset from Assumption \ref{ass:data}, we consider the following sample-based approximation of the optimization problem \eqref{opt_probPre}
\begin{subequations}\label{opt_prob}
\clr{black}{
\begin{align}
\operatornamewithlimits{min}_{\substack{\bm{{A}}, \bm{C}, \bm{\psi}(\cdot)}}  &\sum^{N}_{i=1}
{\left\|{{\bm{\xi}}}^{(i)}\tsgn{-}\begin{bmatrix}\bm{C}\bm{A}\\\bm{C}\end{bmatrix}\boldsymbol{\psi}(\bm{x}^{(i)})\right\|^{2}_2}\label{obj_opt} \\
\text{s.t.} \quad &\bm{J}_{\bm{\psi}}(\bm{x}^{(i)}){{\bm{\xi}}}_{1}^{(i)}=\bm{{{A}}} \bm{\psi}(\bm{x}^{(i)})\label{space_opt} \\
& \quad \bm{A}  \text{ is Hurwitz}\label{stab_cnstr}
\end{align}
}
\end{subequations}
delivering a finite-dimensional \clr{black}{{linear predictor}
}
\begin{subequations}\label{eq:LTI full}
\begin{align}
\bm{z}^{}_0 &=\bm{\psi}^{}(\bm{x}(0)), \label{eq:LTI:1}\\
\dot{\bm{z}}^{} &=\bm{{A}}\bm{z}^{}, \label{eq:LTI:2}\\
\hat{\bm{x}} &=\bm{C} \bm{z}^{} \label{eq:LTI:3}
\end{align}
\end{subequations}
as a representation of the Koopman operator generator.}
With this model, the nonlinearity of a $d$-dimensional ODE (\ref{eq:sys}) is traded for a nonlinear ``lift" (\ref{eq:LTI:1}) of the initial condition $\bm{x}(0)$ to higher dimensional ($D \gg d$) Koopman-invariant coordinates \eqref{eq:LTI:2} such that the original state can be linearly reconstructed via \eqref{eq:LTI:3}. 
Moreover, \eqref{opt_prob} {allows} to identify an arbitrary amount of Koopman-invariant features directly instead of only finding ones that lie in a heuristically predetermined dictionary of functions.
Thus, the sole error source in the resulting system \eqref{eq:LTI full} is due to the finite truncation of the infinite sum in \eqref{eq:KMD}. This is crucial for long-term accurate linear prediction, when, e.g., the model \eqref{eq:LTI full} is used as a motion generator \cite{Khansari2011} under safety-critical operation limits. 
\section{DIFFEOMORPHICALLY LEARNING KOOPMAN-INVARIANT COORDINATES}\label{sec:KF}
\clr{black}{Representing linear and nonlinear systems with equilibria differs solely in the fact that, in the case of linear systems, \clr{black}{the expansion \eqref{eq:KMD}, e.g., $\bm{h}(\bm{x})\tsgn{=}\operatorname{id}(\bm{x})$ is finite}, while in the nonlinear case it is generally infinite \cite{Mezic2019}. Nevertheless, a finite-dimensional linear system can be lifted to infinite-dimensions through generalized monomial features known to preserve Koopman-invariance \cite{Zeng2018}. We exploit the aforementioned property of linear systems to construct a generalized expansion of \eqref{eq:KMD} \clr{black}{for nonlinear systems admitting an exact linearization via a diffeomorphic coordinate change.}
As illustrated in Figure \ref{fig:commDiag}, the idea is to ``morph'' the original nonlinear dynamics into latent linear dynamics via a diffeomorphism $\bm{d}$. Nevertheless, finite-dimension convergence results for a decomposition in the form of \eqref{eq:KMD} are an open research question~\cite{Bevanda2021a} and out of scope for this work.}
\subsection{Construction of Lifting Functions}

Instead of directly attempting to find solutions per Definition \ref{def:LTIbase}, we use a diffeomorphic relation to a latent linear model to obtain those, providing us with Koopman-invariant lifting coordinates that fulfill \eqref{space_opt}.
\begin{definition}\label{def:smthEQ}
    Vector fields $\dot{\bm{x}}=\bm{f}(\bm{x})$ and $\dot{\bm{y}}=\bm{g}(\bm{y})$ are diffeomorphic, or smoothly equivalent, if there exists a diffeomorphism $\bm{d}\!: \mathbb{R}^{d} \!\mapsto\! \mathbb{R}^{d}$ such that $ \bm{f}(\bm{x})\!=\!\bm{J}^{-1}_{\bm{d}}(\bm{x})\bm{g}(\bm{d}(\bm{x}))$.
\end{definition}

In essence, diffeomorphic systems have equivalent dynamics just in different coordinates. This is what we exploit as the system \eqref{eq:sys} is diffeomorphic to a latent linear system $\dot{\bm{y}}=\underline{\bm{A}}\bm{y}$ under Assumption \ref{ass:sysCLS} \cite{Lan2013}.
\clr{black}{It is straightforward to see that the diffeomorphism $\bm{d}$ conforms to Definition~\ref{def:LTIbase}, such that  $\bm{\psi}=\bm{d}$ includes features satisfying condition \eqref{space_opt}. Nevertheless, it is still a nonlinear model after the initial transformation, while we look for a linear reconstruction map \eqref{eq:LTI:3} to construct a linear predictor for the nonlinear system.
To achieve the former, we need to allow the latent dynamical system to have a dimension $D\tsgn{\gg}d$, which cannot be achieved directly with diffeomorphisms since they preserve dimensionality. Therefore, we propose to lift the diffeomorphic features $\bm{d}$ to a higher dimensional space of monomials as this preserves the Koopman-invariance \eqref{space_opt} of $\bm{d}$.}
Thus, the idea is to define monomial coordinates based on the latent vector  $\bm{d}(\bm{x})\tsgn{=}\bm{y}\tsgn{=}[y_1,\dots,y_d]^{\top}$ through $y^{\bm{\alpha}}\tsgn{=}y_{1}^{\alpha_{1}} y_{2}^{\alpha_{2}} \cdots y_{d}^{\alpha_{d}}$, 
where $\bm{\alpha}\in\mathbb{N}_0^d$ is a multi-index. Then, we obtain a lifted coordinate vector by concatenating all monomials $y^{\bm{\alpha}}$ up to order $\|\bm{\alpha}\|_1\tsgn{=}\alpha_{1}\tsgn{+}\cdots\tsgn{+}\alpha_{d}\leq\overline{p}$ in a lexicographical ordering in a vector $\bm{y}^{[\overline{p}]}$. By construction, $\bm{y}^{[\overline{p}]}$ inherits the linear dynamical system description from $\bm{y}\tsgn{=}\bm{d}(\bm{x})$, as shown in the following lemma.
\begin{lemma}\label{lem2}
\clr{black}{For $\bm{y}^{[\bar{p}]}\!\!\in\!\!\mathbb{R}^{D}$ with $D\tsgn{=}((
d\tsgn{+}\overline{p})!/(d!\overline{p}!)) \tsgn{-} 1$, there exists a {transition} matrix $\bm{A}_{[\overline{p}]}(\underline{\bm{A}})$ describing the dynamics as a linear ordinary differential equation
$\dot{\bm{y}}^{[\overline{p}]}\tsgn{=}\bm{A}_{[\overline{p}]}(\underline{\bm{A}}) \bm{y}^{[\overline{p}]}$.}

\begin{proof}
By examining the dynamics of monomials corresponding to the multi-index $\bm{\alpha}\tsgn{=}[\alpha_1\tsgn{,}\dots\tsgn{,}\alpha_d]^{\top}$ with order $\|\bm{\alpha}\|_1\tsgn{=}p$, we obtain linear ODEs  \cite{Zeng2018} $\dot{\bm{y}}^{[p]}\tsgn{=}\bm{A}_{[{p}]}(\underline{\bm{A}})\bm{y}^{[p]}$,
    linearly dependent on $\underline{\bm{A}}$.
    Since all $\bm{y}^{[p]}$ systems $p\in\Set{N}$ are decoupled from each other, their concatenation up to order $\bar{p}$ as $\bm{\dot{y}}^{[\overline{p}]}\tsgn{=}\bm{A}_{[\overline{p}]}(\underline{\bm{A}})\bm{y}^{[\overline{p}]}$
    with $\bm{y}^{[\overline{p}]}\tsgn{=}[\bm{y}^{[1]^{\top}}\tsgn{,}\dots\tsgn{,}\bm{y}^{[p]^{\top}}\tsgn{,}\dots\tsgn{,}\bm{y}^{[\bar{p}]^{\top}}]^{\top}$ and $\bm{A}_{[\overline{p}]}(\underline{\bm{A}})\tsgn{=}\operatorname{diag}\{\bm{A}_{[1]}(\underline{\bm{A}})\tsgn{,}\dots\tsgn{,}\bm{A}_{[\overline{p}]}(\underline{\bm{A}})\} \in \mathbb{R}^{D\times D}$ remains a collection of linear ODEs. Given every $\bm{y}^{[p]}$ is a combination with replacement of $d$ elements and $p$ samples, the total amount of concatenated coordinates up to order $\overline{p}$ equals $D\tsgn{=}\sum^{\overline{p}}_{p\tsgn{=}1}\icol{p+d-1 \\
    p}{\tsgn{=}}\icol{d+\overline{p} \\
    d}\tsgn{-}1$
    via the "hockey-stick" identity~.
    This proves the concatenation up to order $\overline{p}$ leads to an extended system $\dot{\bm{y}}^{[\overline{p}]}\tsgn{=}\bm{A}_{[{p}]}(\underline{\bm{A}})\bm{y}^{[\overline{p}]}$ spanning invariant subspaces of $\dot{\bm{y}}\tsgn{=}\underline{\bm{A}}\bm{y}$ of size  $D$.
\end{proof}
\end{lemma}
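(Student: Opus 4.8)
The plan is to exploit that, under linear latent dynamics, differentiation preserves the total degree of a monomial, so that the homogeneous components decouple. Writing the latent flow component-wise as $\dot{y}_k = \sum_{j=1}^{d} \underline{A}_{kj} y_j$ and applying the product rule to an arbitrary monomial $y^{\bm{\alpha}}$ with $\|\bm{\alpha}\|_1 = p$, I would obtain
\[
\frac{d}{dt}\, y^{\bm{\alpha}} = \sum_{k=1}^{d} \alpha_k\, y^{\bm{\alpha}-\bm{e}_k}\, \dot{y}_k = \sum_{k=1}^{d}\sum_{j=1}^{d} \alpha_k\, \underline{A}_{kj}\, y^{\bm{\alpha}-\bm{e}_k+\bm{e}_j},
\]
where $\bm{e}_k$ denotes the $k$-th standard multi-index. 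Since each multi-index $\bm{\alpha}-\bm{e}_k+\bm{e}_j$ again has total degree $p$ (and terms with $\alpha_k=0$ drop out), the derivative of any degree-$p$ monomial is a linear combination of degree-$p$ monomials. This degree-preservation is the crux of the argument.

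Next I would collect all degree-$p$ monomials into $\bm{y}^{[p]}$ and read the displayed identity as the matrix equation $\dot{\bm{y}}^{[p]} = \bm{A}_{[p]}(\underline{\bm{A}})\,\bm{y}^{[p]}$, whose entries are by construction homogeneous of degree one in the entries of $\underline{\bm{A}}$ -- justifying the notation $\bm{A}_{[p]}(\underline{\bm{A}})$. Because differentiation maps degree-$p$ monomials strictly back into degree-$p$ monomials, the subsystems for distinct orders never mix. Hence the stacked state $\bm{y}^{[\overline{p}]} = [\bm{y}^{[1]\top},\dots,\bm{y}^{[\overline{p}]\top}]^{\top}$ obeys the block-diagonal linear ODE $\dot{\bm{y}}^{[\overline{p}]} = \bm{A}_{[\overline{p}]}(\underline{\bm{A}})\,\bm{y}^{[\overline{p}]}$ with $\bm{A}_{[\overline{p}]}(\underline{\bm{A}}) = \operatorname{diag}\{\bm{A}_{[1]}(\underline{\bm{A}}),\dots,\bm{A}_{[\overline{p}]}(\underline{\bm{A}})\}$, which establishes the claimed form.

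It then remains to count the dimension $D$. The number of degree-$p$ monomials in $d$ variables equals the number of multisets of size $p$ drawn from $d$ symbols, namely $\binom{p+d-1}{p}$. Summing over $p=1,\dots,\overline{p}$ and applying the hockey-stick identity $\sum_{p=0}^{\overline{p}}\binom{p+d-1}{p} = \binom{d+\overline{p}}{d}$, then subtracting the constant ($p=0$) term, yields $D = \binom{d+\overline{p}}{d} - 1$, matching the statement.

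I expect the main obstacle to be the first step: establishing cleanly that differentiation induces a well-defined linear endomorphism on each homogeneous component, with a matrix that is linear in $\underline{\bm{A}}$. Once this degree-preservation and linearity are secured, the decoupling into block-diagonal form and the combinatorial dimension count are routine.
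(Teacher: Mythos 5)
Your proposal is correct and follows essentially the same route as the paper: per-degree linear ODEs for the homogeneous monomial components, block-diagonal concatenation, and the hockey-stick identity for the dimension count. The only difference is that you explicitly carry out the product-rule computation establishing degree preservation, a step the paper delegates to a citation of Zeng et al.
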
%
\looseness=-1
\begin{figure}[t!]
    \centering
  \begin{tikzpicture}[baseline= (a).base, ampersand replacement=\&]
    \node[scale=1.0] (a) at (0,0){
    	\begin{tikzcd}[column sep = huge, row sep = huge, ]
    	\mathbb{X} \arrow{r}{\color{magenta}\bm{d}} \arrow[swap]{d}{\bm{f}} \arrow{d}{\eqref{eq:sys}}  \arrow[thick, bend left]{rr}{\bm{\psi}} \arrow[thick,swap, bend left]{rr}{\text{Prop. \ref{prop1}}} \& \mathbb{Y} \arrow[swap]{d}{\color{magenta}\underline{\bm{A}}} \arrow{r}{\bm{\varrho}:~\bm{y} \mapsto \bm{y}^{[\overline{p}]}} \arrow[swap]{r}{\text{Lem. \ref{lem2}}} \& \mathbb{Z} 
    	\arrow[swap,thick]{d}{\bm{{\bm{A}}_{[\overline{p}]}}({\color{magenta} \underline{\bm{A}}})}\\%
    	\mathcal{T}_{\bm{x}}\mathbb{X} \arrow{r}{\frac{\partial \color{magenta}\bm{d}}{\partial \bm{x}}} \& \mathcal{T}_{\bm{y}}\mathbb{Y} \& \mathcal{T}_{\bm{z}}\mathbb{Z}
    	\arrow[thick,swap, bend left, dashrightarrow]{ll}{\color{magenta}\bm{C}}
    	\end{tikzcd}
    };
    \end{tikzpicture}    
    \vspace{-0.4cm}
    \caption{The diagram of our construction for learning a model of the form \eqref{eq:LTI:1}-\eqref{eq:LTI:3} with the construction pathway in bold and the maps to be learned in magenta. The sets $\Set{X},\Set{Y},\Set{Z}$ correspond to the immediate state-space, latent space and lifted linear model space, respectively; with corresponding tangent spaces denotes as $\mathcal{T}_{\bm{x}}\Set{X},\mathcal{T}_{\bm{y}}\Set{Y},\mathcal{T}_{\bm{z}}\Set{Z}$.
    }
    \label{fig:commDiag}
\end{figure}
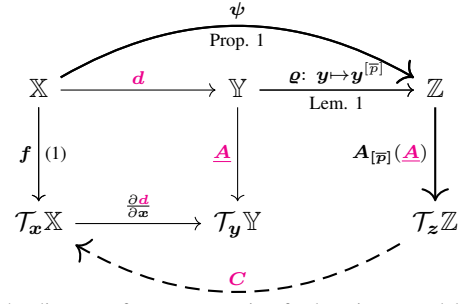
\looseness=-1
\clr{black}{Note that the matrix $\bm{A}_{[\overline{p}]}(\underline{\bm{A}})$ can be constructed as a block-diagonal concatenation of $\bm{A}_{[{p}]}({\bm{A}})$ matrices \cite{Zeng2018} up to order $\overline{p}$.}
Moreover, this monomial lifting of the latent linear system preserves the Koopman-invariance of the diffeomorphism $\bm{d}$ and satisfies \eqref{space_opt} as shown in the following proposition.

\begin{proposition}\label{prop1}
Assume the linear system $\dot{\bm{y}}=\underline{\bm{A}}\bm{y}$ is smoothly equivalent to system~\eqref{eq:sys} via a diffeomorphism $\bm{d}$ such that ${\bm{y}}=\bm{d}(\bm{x})$. Then the lifted features $\bm{\psi}=\bm{d}^{{[\overline{p}]}}$ satisfy \eqref{space_opt}, i.e.,  $\bm{\psi}(\bm{x})=\bm{d}^{{[\overline{p}]}}(\bm{x})=\bm{y}^{{[\overline{p}]}}$ are Koopman-invariant coordinates and define a latent linear system\looseness=-1
\begin{subequations}\label{eq:liftedLTI full}
\begin{align}
\bm{z}^{}_0 &=\bm{d}^{{[\overline{p}]}}(\bm{x}(0)), \label{eq:liftedLTI:1}\\
\dot{\bm{z}}^{} &=\bm{A}_{[\overline{p}]}(\underline{\bm{A}})\bm{z}^{}. \label{eq:liftedLTI:2}
\end{align}
\end{subequations}

\begin{proof}
{Consider vector fields $\dot{\bm{x}}=\bm{f}(\bm{x})$ and $\dot{\bm{y}}=\underline{\bm{A}}\bm{y}$ smoothly equivalent through a diffeomorphism $\bm{y}=\bm{d}(\bm{x})$.}
A simple chain of equalities
\begin{align}\label{prf2}
    \mathcal{G}_{\ko_{\bm{f}}}\bm{d}(\bm{x}) &\overset{Def. \ref{def:LTIbase}}{=}\bm{J}_{\bm{d}}(\bm{x})\bm{f}(\bm{x})\overset{Def. \ref{def:smthEQ}}{=}\underline{\bm{A}}\bm{d}(\bm{x})~=\mathcal{G}_{\ko_{\underline{\bm{A}}}}\bm{y} 
\end{align}
shows Koopman-invariant subspaces of the smoothly equivalent vector fields' infinitesimal generators evolve linearly with the same $\underline{\bm{A}}$.
Using the result of Lemma \ref{lem2}, $\bm{d}^{{[\overline{p}]}}(\bm{x})$ are Koopman-invariant coordinates of $\mathcal{G}_{\ko_{\bm{f}}}$ evolving linearly with $\bm{A}_{[{p}]}(\underline{\bm{A}})$ -- concluding the proof.
\end{proof}

\clr{black}{In essence, the above proposition establishes that the $\mathcal{G}_{\ko_{\bm{f}}}$- and $\mathcal{G}_{\ko_{\underline{\bm{A}}}}$-invariant coordinates have the same (linear) dynamics. This, in turn, means that diffeomorphically linearizable systems share the same spectra.}
\end{proposition}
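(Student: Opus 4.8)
The plan is to verify the Koopman-invariance condition \eqref{space_opt} directly for the candidate features $\bm{\psi}=\bm{d}^{[\overline{p}]}$, fusing the smooth-equivalence hypothesis with the linear lifting guaranteed by Lemma~\ref{lem2}. First I would specialize Definition~\ref{def:smthEQ} to the linear target $\bm{g}(\bm{y})=\underline{\bm{A}}\bm{y}$: this reads $\bm{f}(\bm{x})=\bm{J}_{\bm{d}}^{-1}(\bm{x})\underline{\bm{A}}\bm{d}(\bm{x})$, equivalently $\bm{J}_{\bm{d}}(\bm{x})\bm{f}(\bm{x})=\underline{\bm{A}}\bm{d}(\bm{x})$. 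Comparing with Definition~\ref{def:LTIbase}, this already certifies that $\bm{d}$ itself is a collection of Koopman-invariant features with transition matrix $\underline{\bm{A}}$, i.e. $\mathcal{G}_{\ko_{\bm{f}}}\bm{d}=\underline{\bm{A}}\bm{d}$.

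Next I would promote this from $\bm{d}$ to its monomial lift $\bm{d}^{[\overline{p}]}=\bm{\varrho}\circ\bm{d}$, where $\bm{\varrho}:\bm{y}\mapsto\bm{y}^{[\overline{p}]}$ is the lexicographic monomial map. Since $\bm{d}$ is a $C^1$ diffeomorphism and $\bm{\varrho}$ is polynomial, the composition is $C^1$ as required by Definition~\ref{def:LTIbase}, and the chain rule gives $\bm{J}_{\bm{d}^{[\overline{p}]}}(\bm{x})=\bm{J}_{\bm{\varrho}}(\bm{d}(\bm{x}))\,\bm{J}_{\bm{d}}(\bm{x})$. Substituting the smooth-equivalence identity from the first step then yields
\begin{align*}
\mathcal{G}_{\ko_{\bm{f}}}\bm{d}^{[\overline{p}]}(\bm{x})
&=\bm{J}_{\bm{d}^{[\overline{p}]}}(\bm{x})\bm{f}(\bm{x})
=\bm{J}_{\bm{\varrho}}(\bm{d}(\bm{x}))\,\bm{J}_{\bm{d}}(\bm{x})\,\bm{f}(\bm{x})\\
&=\bm{J}_{\bm{\varrho}}(\bm{d}(\bm{x}))\,\underline{\bm{A}}\,\bm{d}(\bm{x}).
\end{align*}

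The closing step is to recognize Lemma~\ref{lem2} as the pointwise algebraic identity $\bm{J}_{\bm{\varrho}}(\bm{y})\,\underline{\bm{A}}\,\bm{y}=\bm{A}_{[\overline{p}]}(\underline{\bm{A}})\,\bm{\varrho}(\bm{y})$: differentiating $\bm{y}^{[\overline{p}]}=\bm{\varrho}(\bm{y})$ along $\dot{\bm{y}}=\underline{\bm{A}}\bm{y}$ gives $\dot{\bm{y}}^{[\overline{p}]}=\bm{J}_{\bm{\varrho}}(\bm{y})\underline{\bm{A}}\bm{y}$, which Lemma~\ref{lem2} equates to $\bm{A}_{[\overline{p}]}(\underline{\bm{A}})\bm{y}^{[\overline{p}]}$. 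Evaluating this identity at the point $\bm{y}=\bm{d}(\bm{x})$ and inserting it into the display above produces $\mathcal{G}_{\ko_{\bm{f}}}\bm{d}^{[\overline{p}]}(\bm{x})=\bm{A}_{[\overline{p}]}(\underline{\bm{A}})\,\bm{d}^{[\overline{p}]}(\bm{x})$, which is exactly \eqref{space_opt} with $\bm{A}=\bm{A}_{[\overline{p}]}(\underline{\bm{A}})$; the lifted latent system \eqref{eq:liftedLTI full} is then immediate. I expect the only real obstacle to be the bookkeeping in this last step, namely arguing cleanly that Lemma~\ref{lem2} -- stated there as an ODE for the lifted flow -- may be read as a pointwise Jacobian identity in the $\bm{y}$-coordinates and then pulled back through the diffeomorphism $\bm{d}$ without regularity issues, which is precisely what the chain-rule composition $\bm{J}_{\bm{\varrho}}\circ\bm{d}$ secures.
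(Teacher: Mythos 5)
Your proposal is correct and follows essentially the same route as the paper's proof: first use smooth equivalence to show $\bm{J}_{\bm{d}}(\bm{x})\bm{f}(\bm{x})=\underline{\bm{A}}\bm{d}(\bm{x})$, so that $\bm{d}$ is Koopman-invariant with matrix $\underline{\bm{A}}$, and then invoke Lemma~\ref{lem2} to transfer this to the monomial lift $\bm{d}^{[\overline{p}]}$. The only difference is that you make explicit the chain-rule step and the pointwise reading of Lemma~\ref{lem2} as $\bm{J}_{\bm{\varrho}}(\bm{y})\underline{\bm{A}}\bm{y}=\bm{A}_{[\overline{p}]}(\underline{\bm{A}})\bm{\varrho}(\bm{y})$, which the paper leaves implicit.
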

\looseness=-1
\subsection{Parameterizing Stable System Matrices}
To simplify the computations involved for satisfying \eqref{stab_cnstr},
we propose to employ an unconstrained parameterization of stable matrices akin to \cite{Fan2021}.
As the latent dynamics is parameterized
in terms of low-dimensional matrices $\underline{\bm{A}}$, we utilize an unconstrained parameterization of all Hurwitz matrices $\underline{\bm{A}}$, described by the following lemma.
\begin{lemma}\label{lem:CThrwzParam}
Consider matrices ${\bm{N},\bm{Q},\bm{R} \in \Set{R}^{n \times n}}$, a positive constant $\epsilon\in\mathbb{R}_+$, and the matrix parameterization 
\begin{equation}\label{eq:AParam}
\underline{\bm{A}}(\bm{N}\!,\!\bm{Q}\!,\!\bm{R})\tsgn{=}(\bm{N} \bm{N}^{\top}\tsgn{+}\epsilon \bm{I})^{-1}(\tsgn{-}\bm{Q} \bm{Q}^{\top}\tsgn{-}\epsilon \bm{I}\tsgn{+}\frac{1}{2}(\bm{R}\tsgn{-}\bm{R}^{\top})).
\end{equation}
A matrix $\bm{\mathcal{A}}$ is Hurwitz if and only if $\exists$ $\bm{N},\bm{Q},\bm{R}, \epsilon$, such that $\bm{\mathcal{A}}\tsgn{=}\underline{\bm{{A}}}(\bm{N}\!,\!\bm{Q}\!,\!\bm{R})$.

\begin{proof}
\clr{black}{
Let $\bm{X}\tsgn{=}\bm{N} \bm{N}^{\top}\tsgn{+}\epsilon\bm{I}$, $\bm{Y}\tsgn{=}\bm{Q} \bm{Q}^{\top}\tsgn{+}\epsilon\bm{I}$ and $\bm{Z}\tsgn{=}\frac{1}{2}\left(\bm{R}\tsgn{-}\bm{R}^{\top}\right)$ such that $\bm{\mathcal{A}}\tsgn{=}\bm{X}^{-1}(\tsgn{-}\bm{Y}\tsgn{+}\bm{Z})$. \clr{black}{With $(\tsgn{-}\bm{Y}\tsgn{+}\bm{Z})$ {generalized negative-definite} \cite[Def. 2.1]{DUAN1998509} and $\bm{X}^{-1}$ symmetric positive-definite, the proof follows directly \cite[Thm. 3.1]{DUAN1998509} after transposing $\bm{\mathcal{A}}$.}}
\end{proof}
\end{lemma}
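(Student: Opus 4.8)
The plan is to prove both implications through the Lyapunov characterization of Hurwitz matrices, exploiting the algebraic structure of the factorization $\underline{\bm{A}}=\bm{X}^{-1}(-\bm{Y}+\bm{Z})$ introduced in the statement, where $\bm{X}=\bm{N}\bm{N}^{\top}+\epsilon\bm{I}\succ 0$ and $\bm{Y}=\bm{Q}\bm{Q}^{\top}+\epsilon\bm{I}\succ 0$ are symmetric and $\bm{Z}=\tfrac{1}{2}(\bm{R}-\bm{R}^{\top})$ is skew-symmetric. For the sufficiency (``if'') direction I would test stability with the Lyapunov matrix $\bm{P}=\bm{X}$, which collapses the expression: since $\bm{X}$ is symmetric, $\bm{X}\underline{\bm{A}}+\underline{\bm{A}}^{\top}\bm{X}=(-\bm{Y}+\bm{Z})+(-\bm{Y}+\bm{Z})^{\top}=-2\bm{Y}\prec 0$. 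Because $\bm{X}\succ 0$, the standard Lyapunov theorem then certifies in one line that every matrix of the form \eqref{eq:AParam} is Hurwitz.

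For the necessity (``only if'') direction I would start from an arbitrary Hurwitz $\bm{\mathcal{A}}$ and reconstruct the parameters. By the converse Lyapunov theorem there exists a symmetric $\bm{P}\succ 0$ with $\bm{P}\bm{\mathcal{A}}+\bm{\mathcal{A}}^{\top}\bm{P}\prec 0$. Setting $\bm{X}:=\bm{P}$ and splitting $\bm{P}\bm{\mathcal{A}}$ into its symmetric and skew-symmetric parts, I would define $\bm{Y}:=-\tfrac{1}{2}(\bm{P}\bm{\mathcal{A}}+\bm{\mathcal{A}}^{\top}\bm{P})\succ 0$ and $\bm{Z}:=\tfrac{1}{2}(\bm{P}\bm{\mathcal{A}}-\bm{\mathcal{A}}^{\top}\bm{P})$, so that $\bm{P}\bm{\mathcal{A}}=-\bm{Y}+\bm{Z}$ and hence $\bm{\mathcal{A}}=\bm{X}^{-1}(-\bm{Y}+\bm{Z})$ exactly as required. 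It then remains to realize $\bm{X}$ and $\bm{Y}$ in the prescribed regularized forms: choosing any $0<\epsilon<\min\{\lambda_{\min}(\bm{P}),\lambda_{\min}(\bm{Y})\}$ renders $\bm{P}-\epsilon\bm{I}$ and $\bm{Y}-\epsilon\bm{I}$ positive semidefinite, so their symmetric square roots supply $\bm{N},\bm{Q}$ with $\bm{N}\bm{N}^{\top}=\bm{P}-\epsilon\bm{I}$ and $\bm{Q}\bm{Q}^{\top}=\bm{Y}-\epsilon\bm{I}$, while any skew-symmetric $\bm{Z}$ is trivially of the form $\tfrac{1}{2}(\bm{R}-\bm{R}^{\top})$ (e.g.\ $\bm{R}=\bm{Z}$).

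I expect the surjectivity (necessity) argument to be the main obstacle rather than the sufficiency computation. The delicate points are that a \emph{single} $\epsilon$ must serve both $\bm{X}$ and $\bm{Y}$ simultaneously and that the resulting $\bm{Y}$ is genuinely positive definite -- both guaranteed by the \emph{strict} Lyapunov inequality -- so that the two regularized factorizations exist as real matrices. The cited result \cite[Thm.\ 3.1]{DUAN1998509} packages precisely this correspondence between a generalized-negative-definite numerator and Hurwitz stability; invoking it after transposing $\bm{\mathcal{A}}$ (using that $\bm{\mathcal{A}}$ is Hurwitz iff $\bm{\mathcal{A}}^{\top}$ is, and that transposition moves the symmetric factor $\bm{X}^{-1}$ from the left to the right while flipping the sign of the skew part $\bm{Z}$) lets one bypass re-deriving the Lyapunov machinery. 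The sufficiency step is routine once the symmetric/skew-symmetric splitting is in place and should pose no difficulty.
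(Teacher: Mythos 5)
Your proof is correct, and it takes a more self-contained route than the paper. The paper's proof is essentially a one-line reduction to \cite[Thm.~3.1]{DUAN1998509}: it identifies $-\bm{Y}+\bm{Z}$ as generalized negative-definite and $\bm{X}^{-1}$ as symmetric positive-definite, then invokes Duan's factorization theorem (after a transposition to match the order of the factors). You instead unpack the content of that theorem directly via the Lyapunov characterization: the sufficiency computation $\bm{X}\underline{\bm{A}}+\underline{\bm{A}}^{\top}\bm{X}=-2\bm{Y}\prec 0$ with Lyapunov matrix $\bm{P}=\bm{X}$ is exactly the mechanism hiding inside the cited result, and your necessity argument (converse Lyapunov, symmetric/skew splitting of $\bm{P}\bm{\mathcal{A}}$) reconstructs the factorization explicitly. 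What your version buys is completeness on a point the paper glosses over: Duan's theorem only yields \emph{some} factorization $\bm{\mathcal{A}}=\bm{X}^{-1}(-\bm{Y}+\bm{Z})$ with $\bm{X}\succ 0$ and $-\bm{Y}+\bm{Z}$ generalized negative-definite, whereas the lemma claims surjectivity of the specific $\epsilon$-regularized parameterization $\bm{N}\bm{N}^{\top}+\epsilon\bm{I}$, $\bm{Q}\bm{Q}^{\top}+\epsilon\bm{I}$ with a \emph{common} $\epsilon$; your choice $0<\epsilon<\min\{\lambda_{\min}(\bm{P}),\lambda_{\min}(\bm{Y})\}$ followed by symmetric square roots closes that gap explicitly. (One could even note that scaling $\bm{P}\mapsto c\bm{P}$ shows the construction works for \emph{any} fixed $\epsilon>0$, not just small ones.) What the paper's version buys is brevity and a clean pointer to the literature. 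Both are valid; yours is the more instructive and verifiable argument.
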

For $\epsilon\rightarrow 0^+$, the space of all Hurwitz matrices is covered.
Thus, we can optimize over the low-dimensional matrices $\bm{N}$, $\bm{Q}$ and $\bm{R}$ without worrying about the stability condition \eqref{stab_cnstr} as it is guaranteed by construction. Due to Lemma~\ref{lem2}, this stability extends to the lifted system matrix $\bm{A}_{[\overline{p}]}(\underline{\bm{A}}(\bm{N}\!,\!\bm{Q}\!,\!\bm{R}))$, such that we can reformulate the optimization problem \eqref{opt_prob} as shown in the following corollary.
\clr{black}{
\begin{corollary}\label{cor:singleConstr}
The minimizers $\hat{\bm{{N}}},\hat{\bm{Q}},\hat{\bm{R}}, \hat{\bm{C}}, \hat{\bm{\bm{d}}}(\cdot)$ of the optimization problem
\begin{subequations}\label{step2opt}
\begin{align}
\operatornamewithlimits{min}_{\substack{\bm{{N}},\bm{Q},\bm{R}, \bm{C}, \bm{\bm{d}}(\cdot)}}  &\sum^{N}_{i=1} {\left\|{{\bm{\xi}}}^{(i)}\tsgn{-}\begin{bmatrix}\bm{C}\bm{A}\\\bm{C}\end{bmatrix}\boldsymbol{\psi}(\bm{x}^{(i)})\right\|^{2}_2} \label{obj_opt_uncons} \\
\mathrm{s.t.} &  \quad 
{{\bm{\xi}}}_{1}^{(i)}\tsgn{=}\bm{J}^{-1}_{\bm{{d}}}\underline{\bm{A}}(\hat{\bm{N}}\tsgn{,}\hat{\bm{Q}}\tsgn{,}\hat{\bm{R}})\boldsymbol{d}(\bm{x}^{(i)})
\label{diffeo_cnstr}
\end{align}
\end{subequations}
defines a solution $\bm{\psi}=\hat{\bm{d}}^{{[\overline{p}]}}$, $\bm{A}=\bm{A}_{[\overline{p}]}(\underline{\bm{A}}(\hat{\bm{N}}\tsgn{,}\hat{\bm{Q}}\tsgn{,}\hat{\bm{R}}))$, $\bm{C}=\hat{\bm{C}}$ for the optimization problem~\eqref{opt_prob} and thereby define a model of the form \eqref{eq:LTI full}.

\begin{proof}
Using Proposition~\ref{prop1}, we can replace the condition \eqref{space_opt} by a \eqref{diffeo_cnstr} w.l.o.g. By Lemma \ref{lem:CThrwzParam}, the lower-rank matrix $\underline{\bm{A}}$ is Hurwitz by construction. Following Proposition \ref{prop1}, the eigenvalues of $\bm{A}_{[\bar{p}]}(\underline{\bm{A}})$ satisfy $\sum_{i=1}^{d} \alpha_{i} \underline{\lambda}_{i}$ with $\alpha_i \in \Set{N}_0$, making $\bm{A}_{[\overline{p}]}(\underline{\bm{A}})$ Hurwitz as well -- allowing us to replace the condition \eqref{stab_cnstr} with a Hurwitz matrix parameterization from Lemma \ref{lem:CThrwzParam}.
\end{proof}
\end{corollary}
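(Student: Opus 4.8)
The plan is to prove the corollary by showing that the reformulated program \eqref{step2opt} is a feasible restriction of \eqref{opt_prob} whose objective coincides with that of the original problem under the identifications $\bm{\psi}=\hat{\bm{d}}^{[\overline{p}]}$, $\bm{A}=\bm{A}_{[\overline{p}]}(\underline{\bm{A}}(\hat{\bm{N}},\hat{\bm{Q}},\hat{\bm{R}}))$ and $\bm{C}=\hat{\bm{C}}$. Since the objective \eqref{obj_opt_uncons} is literally \eqref{obj_opt} after these substitutions, it suffices to verify that any feasible point of \eqref{step2opt} yields a point satisfying both constraints \eqref{space_opt} and \eqref{stab_cnstr} of the original problem; the two constraints are treated separately.

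For the Koopman-invariance constraint \eqref{space_opt}, I would observe that the equality constraint \eqref{diffeo_cnstr} is precisely the smooth-equivalence relation of Definition~\ref{def:smthEQ} between \eqref{eq:sys} and the latent linear field $\dot{\bm{y}}=\underline{\bm{A}}\bm{y}$, evaluated sample-wise. Proposition~\ref{prop1} then applies directly and gives that the monomial lift $\bm{\psi}=\hat{\bm{d}}^{[\overline{p}]}$ is Koopman-invariant with transition matrix $\bm{A}_{[\overline{p}]}(\underline{\bm{A}})$, so $\bm{J}_{\bm{\psi}}(\bm{x}^{(i)})\bm{\xi}_1^{(i)}=\bm{A}_{[\overline{p}]}(\underline{\bm{A}})\bm{\psi}(\bm{x}^{(i)})$ holds at every sample and \eqref{space_opt} is discharged without loss. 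For the stability constraint \eqref{stab_cnstr}, Lemma~\ref{lem:CThrwzParam} guarantees that $\underline{\bm{A}}(\hat{\bm{N}},\hat{\bm{Q}},\hat{\bm{R}})$ is Hurwitz for any parameter values, so all its eigenvalues $\underline{\lambda}_i$ satisfy $\mathrm{Re}(\underline{\lambda}_i)<0$. Invoking the spectral structure of the lift from Lemma~\ref{lem2}, the eigenvalues of $\bm{A}_{[\overline{p}]}(\underline{\bm{A}})$ are the combinations $\sum_{i=1}^{d}\alpha_i\underline{\lambda}_i$ over multi-indices with $1\le\|\bm{\alpha}\|_1\le\overline{p}$; since each such index has at least one positive entry, $\mathrm{Re}\big(\sum_i\alpha_i\underline{\lambda}_i\big)=\sum_i\alpha_i\,\mathrm{Re}(\underline{\lambda}_i)<0$, so $\bm{A}_{[\overline{p}]}(\underline{\bm{A}})$ is Hurwitz and \eqref{stab_cnstr} holds.

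The main obstacle is the spectral-inheritance step used for Hurwitzness: one must justify that the eigenvalues of the block-diagonal lifted matrix are exactly the non-negative integer combinations of the latent eigenvalues, not merely a superset or subset. For diagonalizable $\underline{\bm{A}}$ this is an elementary computation on products of eigenfunctions, but in the defective case it requires viewing each block $\bm{A}_{[p]}$ as the derivation induced by $\underline{\bm{A}}$ on the space of degree-$p$ monomials (the symmetric tensor power), whose spectrum is precisely the degree-$p$ combinations -- the content already packaged in Lemma~\ref{lem2} via \cite{Zeng2018}. Once this is in place, both constraints are satisfied and the objective value is preserved, so the minimizers of \eqref{step2opt} constitute a feasible point of \eqref{opt_prob} and hence define a model of the form \eqref{eq:LTI full}; moreover, since Lemma~\ref{lem:CThrwzParam} covers every Hurwitz latent matrix as $\epsilon\to 0^+$, the parameterization excludes no admissible latent dynamics and the restriction is justified.
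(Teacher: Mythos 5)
Your proof follows essentially the same route as the paper's: Proposition~1 discharges the Koopman-invariance constraint \eqref{space_opt} via the sample-wise diffeomorphism condition \eqref{diffeo_cnstr}, Lemma~3 gives Hurwitzness of $\underline{\bm{A}}$ by construction, and the spectral combinations $\sum_{i}\alpha_i\underline{\lambda}_i$ give Hurwitzness of the lifted matrix. You are in fact somewhat more careful than the paper, correctly observing that the argument needs $\|\bm{\alpha}\|_1\geq 1$ (otherwise the zero eigenvalue would appear and Hurwitzness would fail) and that the spectral-inheritance claim, including the defective case, really rests on Lemma~2 and the structure from \cite{Zeng2018} rather than on Proposition~1 as the paper cites.
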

\begin{remark}\label{simplification}
The above corollary allows for a twofold simplification of the original problem \eqref{opt_prob}.
Firstly, the unconstrained parameterization from Lemma \ref{lem:CThrwzParam} simplifies the optimization as the linear dynamics are Hurwitz by construction. Secondly, the above learning problem results in a learned nonlinear mapping and a linear dynamics matrix of immediate state-space dimension, w.l.o.g. for the considered system class, allowing us to parameterize a possibly high-dimensional \textit{linear predictor} with a function approximator and linear dynamics of comparatively low dimension.
\end{remark}}
\subsection{Structured Relaxation of Exact Smooth Equivalence}

To ease the use of standard training algorithms for expressive function approximators such as neural networks, we relax the optimization problem \eqref{step2opt} by considering \eqref{diffeo_cnstr} as an additional summand in the cost \eqref{obj_opt_uncons}.
{To allow for a structured relaxation of \eqref{step2opt}, one needs to ensure the function approximator $\bm{d}$ is of a suitable, \textit{diffeomorphic}, hypothesis class. For that, we side with the invertible neural network (INN) hypothesis class $\mathcal{H}_{_\text{INN}}$ as various INN architectures universally approximate $C^2$-diffeomorphisms ($D^2$) with respect to the $L^{p}$-/$\operatorname{sup}$-norm \cite{Teshima2020}. As $\mathcal{H}_{_\text{INN}}$ is dense in $D^2$, considering \eqref{diffeo_cnstr} as an additional cost summand admits an arbitrary small Koopman-invariance residual. \clr{black}{ Then, the following relaxation of \eqref{opt_prob} is possible}
\begin{subequations}\label{obj_opt_unconsFinal}
\clr{black}{\begin{align}
\operatornamewithlimits{min}_{\substack{{\bm{\theta}}_{\underline{\bm{A}}}\tsgn{,}\bm{C}\tsgn{,}\bm{\bm{d}} \in \mathcal{H}_{_{\text{INN}}}}}  \sum^{N}_{i=1}&{\left\|{{\bm{\xi}}}^{(i)}\tsgn{-}\begin{bmatrix}\bm{C}\bm{A}_{[\overline{p}]}\left(\underline{\bm{A}}({\bm{\theta}}_{\underline{\bm{A}}})\right)\\\bm{C}\end{bmatrix}{\bm{d}}^{{[\overline{p}]}}(\bm{x}^{(i)})\right\|^{2}_2} \label{eq:initOpt}\\
\tsgn{+}&\left\|{{\bm{\xi}}}_{1}^{(i)}\tsgn{-}\bm{J}^{-1}_{\bm{{d}}}\underline{\bm{A}}({\bm{\theta}}_{\underline{\bm{A}}})\boldsymbol{d}(\bm{x}^{(i)})\right\|^{2}_2\label{eq:pureSE}\\
\tsgn{+}&\left\|\bm{J_{{\bm{{d_{{}}}}}}}(\bm{0})\tsgn{-}\bm{I}\|_2^2 {+} \|\bm{d_{}}(\bm{0})\tsgn{-}\bm{0}\right\|_2^2\label{eq:loss_nearid} 
\end{align}}
\end{subequations}
where \eqref{eq:pureSE} replaces \eqref{diffeo_cnstr}, $\bm{\theta}_{\underline{\bm{A}}}=\{\bm{N}\!,\!\bm{Q}\!,\!\bm{R}\}$ and \eqref{eq:loss_nearid} is the near-identity enforcing cost \cite[Theorem 2.3]{Lan2013}. Although not necessary in principle, it improves convergence in a local neighborhood of the equilibrium.}

\clr{black}{\begin{remark}
The problem \eqref{obj_opt_unconsFinal} does not modify the structure of \eqref{obj_opt}, but merely adds the supervised cost \eqref{eq:pureSE} enforcing \eqref{diffeo_cnstr} and data independent cost terms \eqref{eq:loss_nearid} penalizing undesirable behavior. This results in an overall supervised learning approach to obtain a model of the form \eqref{eq:LTI full} by fitting $\bm{x}^{(i)} \mapsto {\bm{\xi}}^{(i)}\equiv\tilde{\bm{f}}(\bm{x}^{(i)})$.
\end{remark}}

For realizing complex diffeomorphisms, we employ INNs based on coupling flows (CF-INN) \cite{Dinh2017} that successively compose simpler diffeomorphisms called \emph{coupling layers} $\bm{{\hat{d}}_i}$ using the fact that diffeomorphic maps are closed under composition, so that $\bm{y}=\bm{{\hat{d}}}(\bm{x}) = \bm{{\hat{d}}_k} \circ ... \circ \bm{{\hat{d}}_1}(\bm{x})$.
Each coupling layer $\bm{{\hat{d}}_j}$ is defined to couple a disjoint partition of the input $\bm{x}=[\bm{x}^{\top}_a~\bm{x}^{\top}_b]^{\top}$ with two subspaces $\bm{x_a} \in \Set{R}^{d-n}$, $\bm{x_b} \in \Set{R}^n$ where $n \in \Set{N}$ and $d\geq2$, in a manner that ensures bijectivity. This can be realized via affine coupling flows (ACF), which have coupling layers\looseness=-1
\begin{equation}
\bm{{\hat{d}}}_{i}(\bm{x}^{(i)}) = \begin{bmatrix}\bm{x_a}^{(i)}\\\bm{x_b}^{(i)} \odot \operatorname{exp}(\bm{s}_j(\bm{x_a}^{(i)})) + \bm{t}_j(\bm{x_a}^{(i)})\end{bmatrix}
\label{eq:coupling_layer} 
\end{equation}
with scaling functions $\bm{s}_j:\mathbb{R}^n \mapsto \mathbb{R}^{N-n}$ and translation functions $\bm{t}_j:\mathbb{R}^n \mapsto \mathbb{R}^{N-n}$ that can be chosen freely. The parameters of the diffeomorphic learner consist of the weights and biases in the neural networks of the scaling and translation functions concatenated in parameters $\bm{w}=[\bm{w}^{\top}_{\bm{s}_1},\bm{w}^{\top}_{\bm{t}_1}, \cdots, \bm{w}^{\top}_{\bm{s}_k},\bm{w}^{\top}_{\bm{t}_k}]^{\top}$.

Since the ACF are constructed to be diffeomorphisms, we can optimize over the parameters $\bm{w}$ instead of diffeomorphisms in \eqref{obj_opt_unconsFinal}.
Crucially, it allows us to guarantee the stability of systems \eqref{eq:LTI full} induced by the solutions of \eqref{obj_opt_unconsFinal}, as shown in the following theorem. 
\begin{theorem}\label{thm:GAS}
Let diffeomorphisms $\bm{d}=\bm{{\hat{d}}_k} \circ \ldots \circ \,\bm{{\hat{d}}_1}(\bm{x})$ be parameterized through coupling layers  \eqref{eq:coupling_layer}, which are defined using continuously differentiable functions $\bm{s}_i$, $\bm{t}_i$.
Then, every optimization problem \eqref{obj_opt_unconsFinal} has a solution and yields a stable system~\eqref{eq:LTI full}.\looseness=-1
\end{theorem}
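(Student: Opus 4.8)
The plan is to separate the two assertions---stability of the induced system and attainment of a minimizer---since they rest on entirely different mechanisms. I would dispose of stability first, as it is independent of the optimizer and of the approximation quality. For any admissible parameters, Lemma~\ref{lem:CThrwzParam} guarantees that the latent matrix $\underline{\bm A}(\bm N,\bm Q,\bm R)$ in \eqref{eq:AParam} is Hurwitz, so its spectrum $\{\underline{\lambda}_i\}_{i=1}^d$ lies in the open left half-plane. Using the spectral structure of $\bm A=\bm A_{[\overline{p}]}(\underline{\bm A})$ established via the block-diagonal lift of Lemma~\ref{lem2} and recalled in Corollary~\ref{cor:singleConstr}, every eigenvalue of $\bm A$ has the form $\sum_{i=1}^d \alpha_i \underline{\lambda}_i$ with $\alpha_i\in\mathbb{N}_0$ and $1\le\sum_i\alpha_i\le\overline{p}$, the zero multi-index being excluded since $D=\binom{d+\overline{p}}{d}-1$. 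Taking real parts gives $\operatorname{Re}(\sum_i\alpha_i\underline{\lambda}_i)=\sum_i\alpha_i\operatorname{Re}(\underline{\lambda}_i)<0$, so $\bm A$ is Hurwitz and the latent dynamics \eqref{eq:LTI:2} is globally (exponentially) stable. As \eqref{eq:LTI:1} is merely an initial condition and \eqref{eq:LTI:3} a linear readout, the whole system \eqref{eq:LTI full} is stable for every feasible $(\bm N,\bm Q,\bm R,\bm C,\bm d)$, hence in particular for any minimizer returned by \eqref{obj_opt_unconsFinal}; this is the stability-by-construction that makes the guarantee independent of the fitting error.

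For the existence of a solution, I would first reduce the search over $\bm d\in\mathcal{H}_{\text{INN}}$ to a finite-dimensional optimization over the stacked coupling-layer weights $\bm w$ together with $\bm N,\bm Q,\bm R,\bm C$, and then argue that the objective is a continuous, nonnegative, real-valued function of these parameters. The continuous differentiability of each $\bm s_i,\bm t_i$ is precisely what I would use: it makes the composed map $\bm d=\bm{{\hat{d}}_k}\circ\cdots\circ\bm{{\hat{d}}_1}$ and its Jacobian $\bm J_{\bm d}$ depend continuously on $(\bm w,\bm x)$, while the triangular structure of \eqref{eq:coupling_layer} gives Jacobian determinant $\exp(\sum_j \bm s_j(\cdot))\neq 0$, so $\bm J_{\bm d}^{-1}$ exists and is continuous everywhere and the term \eqref{eq:pureSE} is well-defined. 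Similarly, $\underline{\bm A}$ in \eqref{eq:AParam} is continuous and finite on the entire parameter space because $\bm N\bm N^\top+\epsilon\bm I\succeq\epsilon\bm I\succ0$ is always invertible for the fixed $\epsilon>0$. Evaluating these maps at the finitely many data points $\bm x^{(i)}$ and at $\bm 0$ shows that the full cost \eqref{eq:initOpt}--\eqref{eq:loss_nearid} is continuous and bounded below by $0$.

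The decisive step---and the one I expect to be the main obstacle---is passing from ``continuous and bounded below'' to ``minimum attained,'' since the parameter space is noncompact and continuity alone does not suffice. The clean route is to fix any admissible reference $\bm p_0$ with finite value $J(\bm p_0)$ and to seek the minimizer inside the closed sublevel set $\{\bm p: J(\bm p)\le J(\bm p_0)\}$; if this set is bounded it is compact, and Weierstrass' theorem yields a minimizer. Establishing this boundedness is delicate because large coupling-layer weights can cancel and leave the fitting and near-identity terms bounded, so the cost need not be coercive in every direction of $\bm w$. I would therefore either argue coercivity only along the directions that actually displace the images $\bm d^{[\overline{p}]}(\bm x^{(i)})$ and their reconstruction through $\bm C$---using that the data are in general position so the squared-error terms grow---while excluding the cancelling directions via the structure of the coupling maps, or, more simply, observe that restricting the search to a sufficiently large compact box of parameters leaves the infimum unchanged, so the minimum is attained there without loss of generality. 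Combining this with the stability argument shows that every instance of \eqref{obj_opt_unconsFinal} has a solution inducing a stable system \eqref{eq:LTI full}.
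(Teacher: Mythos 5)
Your Hurwitz argument for $\bm{A}_{[\overline{p}]}(\underline{\bm{A}})$ matches the paper's (and you are more careful than the paper in noting that the zero multi-index is excluded because $D=\binom{d+\overline{p}}{d}-1$ drops the constant monomial). However, the step from ``$\bm{A}$ is Hurwitz and the readout is linear'' to ``the system \eqref{eq:LTI full} is stable'' is exactly the step the paper does \emph{not} take for free. The paper's own footnote to the constraint \eqref{stab_cnstrPre} warns that a Hurwitz lifted matrix suffices for asymptotic stability of the induced model only \emph{under certain conditions} \cite[Prop.~1, Rem.~2]{Yi2021}: one must additionally know that the lifting map is an immersion, so that the Hurwitz linear dynamics on the invariant image manifold $\bm{\psi}(\Set{X})$ actually correspond to an asymptotically stable system in the original coordinates (in particular, that the equilibrium of the lift is the image of the equilibrium of \eqref{eq:sys} and nothing degenerate happens under the projection back). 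The paper's proof therefore spends its second half showing that $\bm{\varrho}:\bm{y}\mapsto\bm{y}^{[\overline{p}]}$ has full-rank Jacobian, that $\bm{d}$ is an immersion because the ACF layers with $C^1$ maps $\bm{s}_j,\bm{t}_j$ are diffeomorphisms, and that $\bm{\varrho}\circ\bm{d}$ is an immersion by closure under composition, before invoking \cite[Prop.~1]{Yi2021}. Your proposal never uses the hypothesis that $\bm{d}$ is built from coupling layers with $C^1$ scaling/translation networks for the stability claim at all, which is a sign that this part of the argument is missing: that hypothesis is there precisely to secure the immersion property.

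On existence, you attempt more than the paper does --- the paper's proof is in fact silent on why a minimizer of \eqref{obj_opt_unconsFinal} exists and only argues stability --- and your continuity observations (invertibility of $\bm{N}\bm{N}^{\top}+\epsilon\bm{I}$ for fixed $\epsilon>0$, nonvanishing Jacobian determinant $\exp(\sum_j\bm{s}_j(\cdot))$ making \eqref{eq:pureSE} well defined) are correct and worth having. But your closing move does not close the gap you yourself identify: restricting to ``a sufficiently large compact box'' attains the minimum over that box, yet without coercivity (or some other argument that a minimizing sequence stays bounded) you cannot conclude that this equals the infimum over the whole noncompact parameter space; a minimizing sequence of weights could escape to infinity while the cost decreases. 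As written, neither your sublevel-set route nor the box restriction is established, so the existence claim remains unproved --- a gap you share with the paper, but a gap nonetheless.
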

\begin{proof}
With $\underline{\bm{A}}$ Hurwitz by construction due to Lemma~\ref{lem:CThrwzParam}, following Proposition \ref{prop1},  the eigenvalues of $\bm{A}_{[\bar{p}]}(\underline{\bm{A}})$ are linear combinations $\sum_{i=1}^{d} \alpha_{i} \underline{\lambda}_{i}$ of multi-index entries $\alpha_i \in \Set{N}_0$, making $\bm{A}_{[\overline{p}]}(\underline{\bm{A}})$ Hurwitz as well.
By Proposition \ref{prop1} the map $\bm{\varrho}:\bm{y} \mapsto \bm{y}^{[\overline{p}]}$, representing a monomial basis of the argument, is an immersion as $\operatorname{rank}(\bm{J}_{\bm{\varrho}}(\bm{y}))=\operatorname{dim}(\bm{y})$. As ACFs are diffeomorphisms per construction with $C^1$ function approximators $\bm{s}_j$ and $\bm{t}_j$, $\bm{d}$ is an immersion by construction. Its composition with the lifting map $\bm{\varrho} \circ \bm{d}:\bm{y} \mapsto \bm{d}^{[\overline{p}]}$ is as well due to immersions being invariant under composition. With $\bm{A}_{[\overline{p}]}(\underline{\bm{A}})$ Hurwitz and $\bm{d}^{{[\overline{p}]}}$ immersible, the asymptotic stability of the lifted model \eqref{eq:LTI:1}-\eqref{eq:LTI:2} follows via \cite[Propositon 1]{Yi2021}. Hence, every optimization problem \eqref{obj_opt_unconsFinal} is guaranteed to yield an asymptotically stable system.
\end{proof}
This theorem allows to efficiently obtain approximate solutions to the optimization problem \eqref{opt_prob} in practice, since the differentiability condition for $\bm{s}_j$, $\bm{t}_j$ can be easily satisfied using neural networks with smooth activation functions. Therefore, it transforms the practically intractable problem~\eqref{opt_prob} into an easily implementable supervised learning problem. {
\begin{remark}
While we cannot ensure that the result of \eqref{obj_opt_unconsFinal} delivers exact solutions to \eqref{opt_prob}, Theorem~\ref{thm:GAS} guarantees that these solutions always yield stable systems \eqref{eq:LTI full}.
\end{remark}
}
\section{EVALUATION}\label{sec:Eval}
For the evaluation of the proposed Koopmanizing Flows, we compare its performance to the related work of SKEL\footnote{\scriptsize\url{https://github.com/fletchf/skel.git}} \cite{Fan2021} on the real-world LASA\footnote{\scriptsize\url{https://cs.stanford.edu/people/khansari/download.html}} handwriting dataset \cite{Khansari2011}, \clr{black}{commonly used to compare approaches for learning stable dynamical systems}. \clr{black}{The dataset consists of 26 human-drawn trajectories of various letters and shapes with 7 demonstrations each. We asses the methods' performance in both pure imitation and validation. For imitation, we train with data from every demonstration and test their reproduction. For validation, the LASA dataset is split in four training and three validation trajectories. Model-selection is performed on the training RMSE. The models for each shape are scaled to the range $[-1,1]^d$.}
 To ensure a fair comparison we use the same base sampling for both methods. For Koopmanizing Flows we sample 900 data points for each demonstration trajectory of the dataset. The inputs and targets are the 2D position and velocity, respectively.
\clr{black}{For imitation, 7 layer ACF, with scaling and translation being neural networks consisting of 3 hidden layers with 120 neurons, are used to learn the diffeomorphisms. For validation, the diffeomorphic learners consist of a 9 layer ACF and simpler transformation networks with 2 hidden layers and 50 neurons. To improve numerical stability, a dimension-wise $\operatorname{tanh}(\cdot)$ is taken as a final coupling layer -- scaling the latent space to the unit-box $\mathbb{Y} \subset [-1,1]^d$. Each ACF network has smooth Exponential Linear Units (ELU) as activation functions.} The dimension of the lifting coordinates is $D=44$ ($\bar{p}=8$), \clr{black}{respectively}. Learning is performed employing the ADAM optimizer \cite{Kingma2015AdamAM}. 
The resulting imitation trajectories displayed in Figure~\ref{fig:lasanearID} are simulated until five times the demonstration time, with a different coloring from the point when the demonstration time is exceeded. 
Due to the deterministic nature of Koopmanizing Flows, cross-sections of the demonstrations lead to a contraction to a mean trajectory when reproduced.
In order to evaluate our performance with respect to the related work SKEL \cite{Fan2021}, we compare position predictions in terms of dynamic time warping distance (DTWD) \cite{Berndt1994}, root mean squared error
(RMSE) and partial curve matching (PCM) \cite{Jekel2019}. Statistics for each of the frameworks \clr{black}{on both the imitation and validation task} are visualized in Figure~\ref{fig:frog1}. In comparison to SKEL, Koopmanizing Flows show superior performance in all metrics -- especially ones strongly related to the accuracy of the state-space geometry such as DTWD and PCM. This is to be expected as Koopmanizing Flows are geared towards directly learning features \clr{black}{allowing the system's evolution to be described as a superposition of (lifted) trajectories, or, more formally speaking:} that lie in the span of (generalized) eigenfunctions and inherently describe the state-space geometry \cite{Mezic2019}.
Furthermore, the performance of SKEL deteriorates in the long-term due to a nonlinear reconstruction map, which can cause the equilibrium point of the model to not lie at the end-point of the demonstrated movement. Therefore, the proposed Koopmanizing Flows show how a theoretically well-founded construction of lifting features results in superior performance.\\
\clr{black}{\textit{Discussion:} Regarding the function approximators, other diffeomorphic learners could be explored for an additional performance gain. In Figure \ref{fig:frog1}, the validation results suggest that both approaches are susceptible to outliers. This can be explained by an over-fitting tendency of linear reconstruction in case of validation. Options to resolve the aforementioned include considering various regularization techniques, e.g., sampling different data for reconstruction \cite{Korda2020b}.}
\section{CONCLUSION}
\begin{figure}
    \centering
    \includegraphics[width=0.99\linewidth]{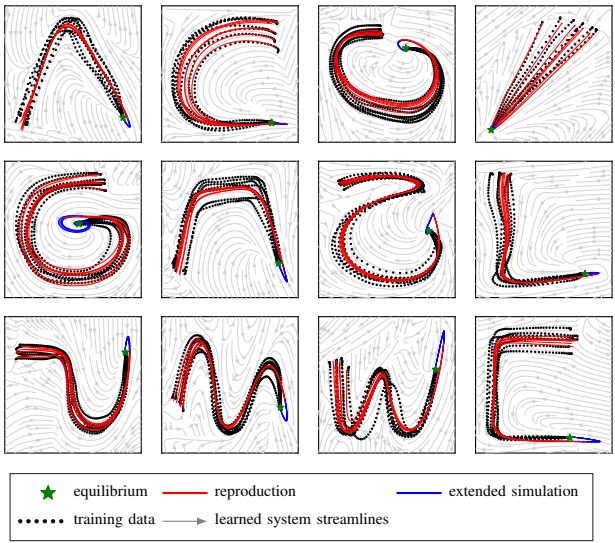}
    \begin{tikzpicture} 
    \pgfdeclareplotmark{mystar}{
    \node[star,star point ratio=2.25,minimum size=1pt, rounded corners=0.1,
          inner sep=0pt,draw=black!50!green,solid,fill=black!50!green] {};
}
    \begin{axis}[%
    hide axis,
    xmin=0,
    xmax=2,
    ymin=0,
    ymax=0.4,
    domain=0.1:0.1,
    legend style={draw=white!15!black,legend cell align=left,nodes={scale=1.2}},
    legend columns=3,
    width=0.5\columnwidth,
    height=0.5\columnwidth
    ]
    \addlegendimage{,mark=mystar, only marks}
    \addlegendentry{equilibrium};
    \addlegendimage{red, thick}
    \addlegendentry{reproduction};
    \addlegendimage{blue, thick}
    \addlegendentry{extended simulation};
    \addlegendimage{black, dots, ultra thick}
    \addlegendentry{training data};
    \addlegendimage{gray,-latex}
    \addlegendentry{learned system streamlines};
    \end{axis}
\end{tikzpicture}
\vspace{-0.75em}
    \caption{Koopmanizing Flows yield trajectories similar in shape to the real ones, demonstrating the lifting construction captures the geometry of the original state-space.  
    }
    \label{fig:lasanearID}
\end{figure}
\definecolor{grey}{HTML}{cccccc}
\begin{figure}[t!]
\centering
\begin{tikzpicture}
    \begin{axis}[%
    hide axis,
    xmin=0,
    xmax=2,
    ymin=0,
    ymax=0.4,
    domain=0.1:0.1,
    legend style={draw=white!15!black,legend cell align=left,nodes={scale=1.2}},
    legend columns=4,
    width=0.5\columnwidth,
    height=0.5\columnwidth
    ]
    \addlegendimage{black, mark=square, only marks}
    \addlegendentry{Imitation};
    \addlegendimage{grey, mark=square*, only marks}
    \addlegendentry{Validation};
    \addlegendimage{blue, mark=diamond*, only marks}
    \addlegendentry{SKEL \cite{Fan2021}};
    \addlegendimage{red, mark=*, only marks}
    \addlegendentry{Koopmanizing Flows};
    \end{axis}
\end{tikzpicture}
\includegraphics[width=0.94\linewidth]{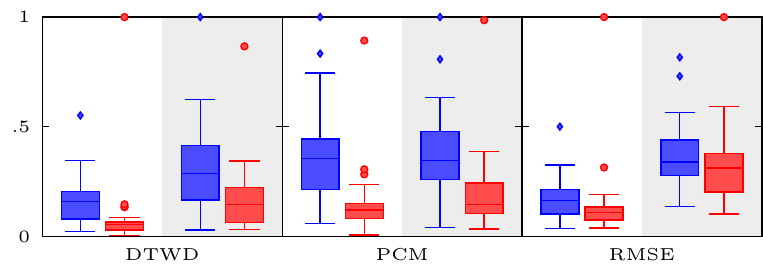}
\vspace{-0.75em}
\caption{\label{fig:frog1}Koopmanizing Flows show superior performance, especially in capturing the accuracy of the shapes. Each metric is normalized to lie in the range $[0,1]$ for ease of comparison.
}
\end{figure}
We have presented Koopmanizing Flows -- a novel, theoretically well-founded and fully data-driven learning framework for stable Koopman operator models with linear prediction and reconstruction.
Our results demonstrate improved performance compared to related work with nonlinear original state reconstruction even though employing the more practicable linear reconstruction. An experimental evaluation on the LASA benchmark shows the superior efficacy of our principled learning approach.
\vspace{-0.5em}
\bibliographystyle{IEEEtran}
\bibliography{bibfile}
\end{document}